\documentclass[11pt]{article}

\usepackage[a4paper,margin=25mm]{geometry}
\usepackage[T1]{fontenc}
\usepackage[utf8]{inputenc}
\usepackage{amsmath,amssymb,amsthm}
\usepackage{bm}
\usepackage{authblk}
\usepackage{hyperref}
\usepackage{tikz}
\usetikzlibrary{arrows.meta,positioning,calc}

\hypersetup{colorlinks=true,linkcolor=blue,citecolor=blue,urlcolor=blue}

\title{Spacetime Formation under Requirements:\\
Contextual Realization and Form-Dependent Probability}
\author{Song-Ju Kim\\
SOBIN Institute LLC, Kawanishi, Hyogo, Japan\\
\texttt{kim@sobin.org}}
\date{}

\newtheorem{definition}{Definition}
\newtheorem{postulate}{Postulate}
\newtheorem{principle}{Principle}
\newtheorem{proposition}{Proposition}
\newtheorem{remark}{Remark}

\DeclareMathOperator{\Real}{Real}
\DeclareMathOperator{\Realize}{Realize}
\DeclareMathOperator{\Glue}{Glue}
\DeclareMathOperator{\Inv}{Inv}

\begin{document}
\maketitle

\begin{abstract}
Quantum cognition is usually formulated by placing agents, questions, and outcomes within a fixed temporal and event structure, and then replacing classical probability by quantum probability when order effects, contextuality, or violations of the law of total probability appear. This paper proposes a different interpretation. We argue that quantum probability can be understood as the fixed-spacetime projection of a deeper process: contextual spacetime formation under finite-state requirements. Here ``spacetime'' is used at the transcendental-operational level: the form in which events, objects, order, separation, and possible interaction are realized, not yet a fully specified physical metric manifold.

The framework begins not with time, space, objects, or probabilities, but with requirements: finite representational capacity, single-state semantic stability, context-sensitive intervention, avoidance of explicit context labels, coherent world-formation, and intersubjective transformability. Under these requirements, a fixed global Boolean event structure may fail. The resulting mismatch, when projected back into a classical spacetime form, appears as interference, noncommutativity, and quantum-like probability.

Building on prior single-state approaches to contextuality, especially the information-theoretic obstruction developed in earlier work, we reinterpret classical contextual bookkeeping cost as the fixed-spacetime shadow of contextual spacetime formation. What appears in a classical representation as auxiliary memory or context labels appears, in the proposed framework, as holonomy-like mismatch among locally Boolean logic-worlds. The interference term is then interpreted as the cross term generated when locally classical realization contributions are nontrivially glued and projected back into a fixed
classical spacetime form.

The result is a transcendental-operational realist account of quantum cognition: transcendental because it concerns the conditions under which objecthood, eventhood, probability, and spacetime become possible; operational because contexts act as interventions rather than labels; and realist because objectivity is defined by invariants preserved across observer- and history-dependent spacetime formations.
\end{abstract}

\section{Introduction}

What is given first: time and space, or the requirements under which a world can be realized?

Classical physics, classical probability, and much of cognitive modeling tacitly begin from an already organized world. Events occur in time, objects occupy space, and probabilities are assigned to subsets of a fixed event space. In such a picture, classical probability is natural: events are assumed to belong to a common Boolean algebra, and uncertainty concerns which event in that pre-given space is actual. When a partition \(\{A,\bar A\}\) is available, the law of total probability follows:
\begin{equation}
P(B)=P(A)P(B\mid A)+P(\bar A)P(B\mid \bar A).
\label{eq:intro_ltp}
\end{equation}
The world is first spatially and temporally organized; probability is then placed on top of that organization.

Quantum cognition begins from the observation that human judgment, decision, and categorization often fail to fit this classical picture. Question order effects, conjunction and disjunction fallacies, ambiguity effects, and violations of the law of total probability suggest that the act of asking a question may change the cognitive state in which subsequent questions are answered.
In the quantum-cognition literature, such phenomena are modeled using Hilbert spaces, noncommuting observables, projection operators, and interference terms 
\cite{BusemeyerBruza2012,BlutnerGraben2016,AertsAerts1995,PothosBusemeyer2013,BruzaWangBusemeyer2015,Khrennikov2010}. 
A typical quantum-like expression replaces Eq.~\eqref{eq:intro_ltp} by
\begin{equation}
P_Q(B)=P(A)P(B\mid A)+P(\bar A)P(B\mid \bar A)+I(A,B),
\label{eq:intro_interference}
\end{equation}
where, in the two-context case, the interference term can be written as
\begin{equation}
I(A,B)=2\sqrt{P(A)P(B\mid A)P(\bar A)P(B\mid \bar A)}\cos\theta .
\label{eq:intro_interference_phase}
\end{equation}
This formalism is empirically powerful. Yet its foundational meaning remains unclear. Is quantum probability merely a compact descriptive tool? Is it a psychological heuristic? Or does it reveal a deeper structural feature of finite cognition?

This paper proposes a different answer. We argue that the appearance of quantum probability in cognition is not merely a change in probabilistic calculus. It is the fixed-spacetime shadow of a more basic transformation: the formation of spacetime itself under finite contextual requirements.

The guiding slogan is:
\begin{quote}
There are only requirements. Realization includes time and space.
\end{quote}
By ``requirements'' we mean the constraints under which a finite agent must form a coherent world: limited internal state, single-state semantic stability, context-sensitive response, avoidance of explicit context labels when possible, and intersubjective transformability of experience. On this view, time and space are not simply the a priori containers in which realization occurs. They are themselves part of realization. If one treats time and space as already fixed, then the mismatch among contextual realizations appears as order effects, noncommutativity, and quantum-like interference. But the requirements do not demand a fixed spacetime form. They demand coherent realization under finite resources.
A clarification is essential. By ``spacetime'' this paper does not initially mean a fully specified physical metric manifold, nor does it claim to derive Lorentzian geometry, general relativity, or the quantitative structure of physical spacetime. Rather, spacetime means the form in which requirements are realized as events, objects, temporal order, spatial separation, coexistence, and possible interaction. The claim is therefore not that quantum cognition already provides a complete physical theory of spacetime, but that the very distinction between event, object, order, and separation should be treated as part of contextual realization rather than as an unquestioned background.

This position is inspired by, but also departs from, Kant's transcendental philosophy. For Kant, space and time are not empirical objects discovered in the world; they are forms through which experience of objects becomes possible \cite{Kant1781}. The present paper radicalizes this point in a resource-sensitive and operational direction. Space and time are not treated as once-and-for-all fixed forms. They are allowed to vary with subject, history, and interaction, insofar as different agents or histories realize the requirements of coherent experience differently. Thus, the question is not merely whether cognition occurs in spacetime. The question is how a finite subject forms a spacetime order at all.

Recent prior work on single-state contextuality provides a technical foothold for this move. 
In single-state models of contextuality, an agent or device is required to maintain one semantically stable internal state across multiple contexts. Contexts are not passive labels but interventions acting on the shared state. If a classical model is allowed to store explicit context labels or duplicate internal representations, context-sensitive behavior can be reproduced at the cost of additional memory. But if the internal state must retain fixed semantics across contexts, classical representation faces an irreducible bookkeeping cost \cite{KimQTOW2026,KimObstruction2026,KimNoGo2026}.

The present paper interprets this cost transcendentally. What appears, in a fixed classical representation, as external context bookkeeping may instead be absorbed by transforming the spacetime form itself. The choice is not only between classical probability and quantum probability. It is between two ways of realizing contextual requirements:
\begin{enumerate}
    \item keep a fixed classical spacetime/event form and pay explicit contextual bookkeeping cost;
    \item allow the spacetime/event form to become contextually transformed, in which case the fixed-spacetime projection appears as quantum interference.
\end{enumerate}

The mechanism proposed below is that local logic-worlds contribute locally classical realization weights, but contextual transition maps glue these contributions nontrivially.
When this non-flat gluing is projected into a fixed classical spacetime form, the cross term appears as quantum-like interference.
This motivates the central hypothesis of the paper:
\begin{quote}
Quantum probability is the fixed-spacetime projection of contextual spacetime formation under finite-state requirements.
\end{quote}


Section 2 situates the proposal in relation to quantum cognition, operational realism, single-state obstruction results, and Kantian transcendental philosophy. Section 3 formalizes requirements and requirement-first realization. Section 4 introduces single-state semantics and contexts as interventions. Section 5 defines classical spacetime forms and local logic-worlds. Section 6 develops contextual spacetime formation and interprets interference as projected holonomy-like mismatch. Section 7 draws consequences for objectivity, objects, facts, laws, nonlocality, and generalized incompleteness. Section 8 clarifies the philosophical position, limitations, and future work. Section 9 concludes.

The central claim is not that quantum cognition proves a complete physics of spacetime. Nor is it that the brain must be a microscopic quantum system. The claim is more precise: quantum cognition exposes a structural fault line. When finite subjects integrate multiple contexts under a fixed spacetime/event form, classical probability may fail. This failure suggests that the deeper object of inquiry is not probability alone, but the formation of the spacetime form in which probability, objecthood, and facticity become possible.

\section{Background and Positioning}
\label{sec:background}

\subsection{Quantum Cognition and Violations of Classical Probability}

The usual formulation of quantum cognition begins with a fixed experiential arena: questions are asked in time, answers are registered as events, and probabilities are assigned to those events. When empirical or operational phenomena violate the classical law of total probability, one introduces quantum probability, noncommuting observables, projection operators, or interference terms \cite{BusemeyerBruza2012}.

Blutner and beim Graben argue that quantum cognition should not be understood merely as curve-fitting. Their account connects quantum probability to bounded rationality, operational realism, partial Boolean algebras, and orthomodular structures \cite{BlutnerGraben2016}. 
This foundational orientation is complementary to broader surveys and defenses of quantum probability in cognition, which emphasize order effects, interference, conjunction and disjunction fallacies, and departures from classical Bayesian probability \cite{PothosBusemeyer2013,BruzaWangBusemeyer2015,Khrennikov2010}. 
This is important for the present paper. Quantum cognition becomes foundationally interesting when it is read not as a convenient calculus, but as evidence that cognitive events do not always belong to one global Boolean event algebra.

The present framework accepts this foundational direction but changes the level of explanation. We do not begin with Hilbert space as the primitive formal arena. We begin with requirements: finite state, context-sensitive intervention, coherent world-formation, and intersubjective transformability. Quantum probability is then interpreted as the fixed-spacetime projection of a deeper process by which local logic-worlds are glued into a coherent world.

\subsection{Prior Single-State Obstruction Results}

Prior work on single-state contextuality identifies a representational pressure that is central to this paper. 
Suppose an agent maintains a shared internal state with fixed semantics across contexts. If contextual statistics are to be represented classically, the model must either embed contextual dependence into the internal state or introduce an auxiliary context-bearing memory. In information-theoretic terms, contextuality can be read as a witness of irreducible information cost in classical representations \cite{KimObstruction2026,KimNoGo2026}.

The present paper uses this idea but changes its interpretation.
In that framework, the relevant obstruction concerns context bookkeeping under fixed shared-state semantics. In the present framework, that bookkeeping cost is interpreted as the fixed-spacetime manifestation of a deeper formation cost. Classical representation pays with labels, memory, or context-indexed hidden variables. Contextual spacetime formation pays with noncommutativity, holonomy-like mismatch, and interference.

The central correspondence is therefore:
\begin{equation}
\text{classical contextual memory}\quad \longleftrightarrow \quad\text{contextual spacetime holonomy}.
\label{eq:memory_holonomy_intro}
\end{equation}

\subsection{Kantian Motivation}

Kant's transcendental philosophy holds that space and time are forms of intuition: not empirical objects inside the world, but conditions under which objects can appear. The present framework accepts this transcendental direction but rejects the rigidity of the forms. Space and time are not treated as fixed once and for all. They are dynamically formed under finite-state, context-sensitive requirements.

Thus the present theory can be understood as dynamic or resource-constrained Kantianism:
\begin{quote}
Space and time are forms of coherent realization under requirements.
\end{quote}
This is not a retreat into subjectivism. The subject does not freely invent the world. Rather, requirements constrain possible formations, and objectivity is defined by invariants preserved across transformations among formations.

\section{Requirement-First Realization}
\label{sec:requirements}

\begin{definition}[Requirement Structure]
Let \(\mathcal{R}\) denote a set of requirements imposed on an agent or realizing system. These include, at minimum:
\begin{enumerate}
    \item finite representational capacity;
    \item maintenance of a single semantically stable internal state;
    \item context-sensitive response;
    \item avoidance of explicit context labels when possible;
    \item formation of a coherent world;
    \item intersubjective transformability of experience.
\end{enumerate}
\end{definition}

These requirements are not intended as an arbitrary list. They are the minimal constraints needed for the present problem. Finite representational capacity prevents unlimited duplication of context-indexed states.
Single-state semantic stability requires that the same internal state remain identifiable across contexts.
Context-sensitive response is needed to account for order effects and contextuality.
Avoidance of explicit context labels isolates the nonclassical pressure rather than solving it by external bookkeeping. 
Coherent world-formation requires local outcomes to be integrated into a stable experiential world. 
Intersubjective transformability is required for objectivity rather than private construction.

The crucial assumption is not that the world is subjective, nor that anything can be constructed arbitrarily. The assumption is that what is primitive is not an a priori spacetime container, but a set of constraints or requirements that any realized world must satisfy.

\begin{postulate}[Requirement-First Realization]
Time, space, objects, and probabilities are not assumed as primitive. They arise as components of a realization map
\begin{equation}
\Realize:\mathcal{R}\longrightarrow(\mathfrak{S},\mathcal{E}_{\mathfrak{S}},P_{\mathfrak{S}}),
\label{eq:realize_map}
\end{equation}
where \(\mathfrak{S}\) is a spacetime form, \(\mathcal{E}_{\mathfrak{S}}\) is the event structure generated under that form, and \(P_{\mathfrak{S}}\) is the probability assignment defined on \(\mathcal{E}_{\mathfrak{S}}\).
\end{postulate}

\begin{remark}[Scope of Spacetime]
The term ``spacetime'' is used deliberately. It does not refer here to a merely subjective image or linguistic convention, but to the form through which events, objects, order, separation, and interaction become possible for a finite realizing system. The full metric, causal, and dynamical structure of physical spacetime requires additional assumptions beyond the present framework.
\end{remark}

Operationally, a spacetime form determines which events can be distinguished, which can be treated as co-present alternatives, which must be ordered sequentially, which can be identified across contexts, and which transformations
preserve objecthood and probability. Thus a spacetime form is not an introspective image of space and time, but an operational structure governing event distinction, ordering, separation, co-possibility, and re-identification.

Thus probability is not written simply as \(P\). It is always indexed by a spacetime form:
\begin{equation}
P_{\mathfrak{S}}.
\end{equation}
If the spacetime form changes, the event structure and probability assignment change as well:
\begin{equation}
\mathfrak{S}_{\mathrm{cl}}\rightarrow \mathfrak{S}^{\ast},\qquad
\mathcal{E}_{\mathfrak{S}_{\mathrm{cl}}}\rightarrow \mathcal{E}_{\mathfrak{S}^{\ast}},\qquad
P_{\mathfrak{S}_{\mathrm{cl}}}\rightarrow P_{\mathfrak{S}^{\ast}}.
\label{eq:form_change}
\end{equation}

\begin{remark}
The notation \(\mathfrak{S}\) is reserved for spacetime forms. The internal state of an agent is denoted by \(X\). This distinction is essential: the theory does not identify the agent's internal state with spacetime itself. It claims that spacetime form is part of the realization through which requirements become a coherent world.
\end{remark}

\section{Single-State Semantics and Contextual Intervention}
\label{sec:single_state}

We now introduce the finite-agent constraint, aligned with single-state contextuality theories.

\begin{postulate}[Single-State Semantics]
An agent maintains an internal state \(X\) whose meaning does not fragment into context-indexed sub-states. Before contextual intervention, the internal state does not explicitly encode the context:
\begin{equation}
I(X;C)=0,
\label{eq:single_state_semantics}
\end{equation}
where \(C\) denotes the context variable.
\end{postulate}

Equation~\eqref{eq:single_state_semantics} does not mean that behavior is noncontextual. It means that context is not stored as an explicit internal label before the intervention. Context sensitivity must therefore be realized through operations acting on the shared state, not through context-indexed copies of the state.

\begin{definition}[Context as Intervention]
A context \(c\in C\) is not a passive label. It is an intervention
\begin{equation}
T_c:X\longrightarrow X_c,
\label{eq:context_intervention}
\end{equation}
which may change subsequent outcome statistics.
\end{definition}

Two contexts \(a,b\in C\) are noncommuting when
\begin{equation}
T_aT_b \neq T_bT_a.
\label{eq:noncommuting_contexts}
\end{equation}
This noncommutativity is a minimal seed of operational temporal ordering. If applying \(a\) before \(b\) differs from applying \(b\) before \(a\), then the agent cannot treat the two contexts as merely co-present Boolean predicates. They must be serialized, ordered, or otherwise separated in the realized world.

\subsection{Classical Context Bookkeeping}

If a classical representation insists on a fixed event structure while reproducing contextual
statistics, it must often introduce an auxiliary memory or label \(M\) that carries contextual information.
This issue is also close to context--content approaches to contextuality, in which random
variables are indexed not only by their contents but also by their contexts
\cite{DzhafarovKujala2016}. 
The single-state condition \(I(X;C)=0\) concerns the pre-intervention shared state. The bookkeeping cost concerns an additional representation \(M\), not the shared state \(X\) itself.

A conservative schematic form of the cost is therefore:
\begin{equation}
I(M;C)>0,
\label{eq:memory_context_info}
\end{equation}
or, in an ontological setting,
\begin{equation}
  H(M) \geq I(C;O\mid \lambda)
\quad \text{schematically},
\label{eq:kim_cost_ontic_clean}
\end{equation}
where \(O\) denotes outcomes and \(\lambda\) an ontic state.
In contextual regimes, this conditional mutual information is positive.
The inequality is schematic because the exact bound depends on the modeling assumptions and coding conventions.
The important point is structural: a classical fixed-form representation pays a nonzero contextual bookkeeping cost.

\section{Classical Spacetime Form and Local Logic-Worlds}
\label{sec:logic_worlds}

A classical spacetime form \(\mathfrak{S}_{\mathrm{cl}}\) is defined by the assumption that all relevant events can be embedded into a single global Boolean event algebra \(\Sigma\). Under this assumption, for any event \(B\) and contextual partition \(\{A,\bar A\}\), the law of total probability holds:
\begin{equation}
P_{\mathfrak{S}_{\mathrm{cl}}}(B)=P(A)P(B\mid A)+P(\bar A)P(B\mid \bar A).
\label{eq:ltp_spacetime_indexed}
\end{equation}
This is not merely a probability law. It expresses a spacetime-form assumption: all relevant alternatives are jointly placeable within a single global logic-world.

\begin{definition}[Logic-World]
A logic-world \(L_c\) is a local event structure generated under a context \(c\). Within a single context, \(L_c\) may be Boolean. The global world is not assumed to be a primitive Boolean whole, but a gluing of local logic-worlds:
\begin{equation}
\mathcal{W}=\Glue\bigl(\{L_c\}_{c\in C}\bigr).
\label{eq:world_gluing}
\end{equation}
\end{definition}

Classical cognition assumes that this gluing is globally Boolean:
\begin{equation}
\Glue\bigl(\{L_c\}_{c\in C}\bigr)=\Sigma.
\label{eq:flat_gluing}
\end{equation}
Quantum-like cognition arises when no such global Boolean gluing is available.
This use of local event structures and global gluing is conceptually related to sheaf-theoretic approaches to contextuality, where contextuality and nonlocality are characterized as obstructions to the existence of global sections
\cite{AbramskyBrandenburger2011}. The present framework does not adopt the sheaf-theoretic formalism directly, but it uses the same structural contrast between local compatibility and global closure.

\begin{proposition}[Classical Closure Failure]
If local logic-worlds \(L_a,L_b\) are generated by noncommuting contextual interventions, then in general there is no single global Boolean event algebra \(\Sigma\) preserving all local probability assignments and sequential statistics.
\end{proposition}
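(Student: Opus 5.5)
The statement is an existence claim qualified by ``in general,'' so I will prove it by exhibiting one concrete pair of noncommuting interventions for which no global Boolean $\Sigma$ works. I will work inside the structure already set up: a single shared state $X$ obeying the single-state condition~\eqref{eq:single_state_semantics}, contexts acting as interventions $T_a,T_b$ as in~\eqref{eq:context_intervention}, and a putative classical form $\mathfrak{S}_{\mathrm{cl}}$ in which every local logic-world $L_a,L_b$ embeds into one Boolean algebra $\Sigma$ with one measure $P$, so that Eq.~\eqref{eq:flat_gluing} holds. I must also make precise what ``preserving sequential statistics'' means. I take it to mean that the sequential probabilities $P(a=i \text{ then } b=j)$ and $P(b=j \text{ then } a=i)$ are both recovered as $P(A_i\cap B_j)$ for events $A_i\in L_a$, $B_j\in L_b$ of $\Sigma$. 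No context label $M$ of the kind in~\eqref{eq:memory_context_info} is allowed, since that would leave the fixed-form, label-free setting.

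The first step is the order argument. In a Boolean algebra intersection is commutative, so $P(A_i\cap B_j)=P(B_j\cap A_i)$. Any flat gluing therefore forces order-independent joint statistics. Noncommutativity $T_aT_b\neq T_bT_a$ as in~\eqref{eq:noncommuting_contexts} is not by itself enough, because the operators could differ while the statistics coincide. I therefore need a concrete instance in which the two orders give different statistics. I will use the standard qubit model: $X$ is the state $|\psi\rangle$, and $T_a,T_b$ are Lüders projective interventions onto two bases at a non-orthogonal, non-parallel angle. A direct computation shows that $P(a{=}1\text{ then }b{=}1)\neq P(b{=}1\text{ then }a{=}1)$ for generic $\psi$. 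This already excludes $\Sigma$.

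The second step is independent of order and addresses the marginal statement. Preserving the single-context assignments $P_a$, $P_b$ and the conditional statistics in $\Sigma$ entails the law of total probability~\eqref{eq:ltp_spacetime_indexed}. In the same qubit instance, the direct marginal $P(B)$ differs from $\sum_i P(A_i)P(B\mid A_i)$ by the interference term~\eqref{eq:intro_interference_phase} with $\cos\theta\neq 0$. A Boolean $\Sigma$ with a single $P$ cannot reproduce both. I will also note the stronger, label-independent obstruction for readers who want it. With three or more contexts, a Kochen--Specker or Abramsky--Brandenburger obstruction~\cite{AbramskyBrandenburger2011} rules out any global section even if sequential data are ignored.

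I expect the main obstacle to be interpretive rather than computational. A classical model can always reproduce these statistics by enlarging $\Sigma$ with context-indexed events (Dzhafarov--Kujala style~\cite{DzhafarovKujala2016}) or a memory $M$. I will handle this by showing that any such enlargement carries positive $I(M;C)$, and hence violates either single-state semantics or the requirement forbidding explicit labels. That is exactly the cost in Eq.~\eqref{eq:kim_cost_ontic_clean}, so the ``in general'' qualifier should be read as ``modulo paying that bookkeeping cost.''
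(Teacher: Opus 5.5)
Your first and second steps are correct, and the first is the paper's own sketch made rigorous. In $\Sigma$ the event $A_i\cap B_j$ carries no order, so a context-free embedding of $L_a$ and $L_b$ forces $P(a{=}i\text{ then }b{=}j)=P(b{=}j\text{ then }a{=}i)$. You sharpen the sketch in two ways. First, you observe that $T_aT_b\neq T_bT_a$ alone does not force different statistics; the paper only says they ``may'' differ. You then supply an explicit L\"uders qubit witness, where the two orders give $P_\psi(a_1)\,|\langle a_1|b_1\rangle|^2$ and $P_\psi(b_1)\,|\langle a_1|b_1\rangle|^2$. Second, your law-of-total-probability argument is a logically independent obstruction: a prior $a$ disturbs the $b$-marginal. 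This ties the proposition directly to the interference term the paper uses later.

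Your final paragraph, however, should be dropped rather than proved, because its claim fails under the paper's own definitions. Take the classical five-state model with states $\psi,a_1,a_2,b_1,b_2$. Let $T_a$ move a state $s$ to $a_i$ (outputting $i$) with probability $|\langle a_i|s\rangle|^2$, and define $T_b$ likewise. L\"uders updates for rank-one projections land on eigenvectors, so this model reproduces every sequential statistic of $a$ and $b$ from $\psi$ exactly. It does so with finite capacity and a pre-intervention state satisfying $I(X;C)=0$. Context acts only as an intervention on the shared state, and there is no auxiliary register $M$. Its only ``enlargement'' is that events are indexed by their position in the sequence. That is exactly what the proposition forbids and nothing more, so keep your opening existence reading of ``in general'' rather than ``modulo bookkeeping cost.''

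Similarly, the Kochen--Specker/Abramsky--Brandenburger aside is not a strengthening of this proposition. With only the two contexts $L_a,L_b$, the product of the marginals is always a global section, which is why sequential data are indispensable here. Moreover, for projective measurements three contexts never yield an obstruction: all pairwise overlaps commute and glue. The minimal quantum cases are CHSH (four contexts) and KCBS (five).
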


\begin{proof}[Sketch]
If \(T_aT_b\neq T_bT_a\), then the outcome statistics of \(b\) may depend on whether \(a\) has been applied before it. Thus \(A\cap B\) cannot be treated as a context-free Boolean intersection independent of order. A global Boolean algebra requires context-independent logical operations and a joint probability assignment over all relevant events. Noncommuting intervention statistics obstruct this embedding. Hence the failure is not merely probabilistic but structural: the global event space assumed by the classical spacetime form is not available.
\end{proof}

\section{Contextual Spacetime Formation}
\label{sec:spacetime_formation}

We now introduce the central theoretical move.

\begin{definition}[Contextual Spacetime Form]
A contextual spacetime form \(\mathfrak{S}^{\ast}\) is a realization of requirements in which local logic-worlds are not forced into a single global Boolean event algebra. Instead, they are glued by transition maps
\begin{equation}
G_{ab}:L_a\longrightarrow L_b
\label{eq:transition_maps}
\end{equation}
that may carry order, phase, projection, or holonomy-like mismatch.
\end{definition}

Here gluing is non-flat when contextual transition maps fail to compose trivially around a contextual loop. For example, for local logic-worlds \(L_a\) and \(L_b\),
\[
G_{ba}G_{ab}\not\simeq \mathrm{id}_{L_a}.
\]
The residual failure of this closed transport to return identically is what we call a holonomy-like mismatch.

\paragraph{Mechanism: From Gluing Mismatch to Interference.}
The crucial point is that the interference term is not introduced as an additional
probability rule. It arises as a cross term when locally classical realization
contributions are glued nontrivially and then projected into a fixed classical
spacetime form.

Consider two local logic-worlds associated with the alternatives \(A\) and
\(\bar A\), and an event \(B\) to be realized. Locally, each branch contributes a
classical realization weight:
\[
r_A=\sqrt{P(A)P(B\mid A)},\qquad
r_{\bar A}=\sqrt{P(\bar A)P(B\mid \bar A)}.
\]
In a flat classical gluing, these alternatives belong to one global Boolean
event algebra. Their contributions are mutually exclusive and therefore add as
probabilities:
\[
P_{\mathfrak S_{\mathrm{cl}}}(B)=r_A^2+r_{\bar A}^2.
\]


In a contextual spacetime form, however, the two local logic-worlds are not assumed to be globally and flatly glued. Their contributions must first be transported through contextual transition maps. This transport may introduce a
relative mismatch of orientation, phase, or compatibility.
When contextual transport preserves the magnitude of local realization weights while changing only their compatibility relation, the minimal two-branch representation is phase-like.
We represent the transported realization contributions as
\[
\rho_A=r_A e^{i\phi_A},\qquad
\rho_{\bar A}=r_{\bar A}e^{i\phi_{\bar A}}.
\]
The relative gluing mismatch is then
\[
\Theta_{\mathfrak S^\ast}(A,B)=\phi_{\bar A}-\phi_A.
\]
This does not imply that all contextual updates are unitary. It only states that the interference-relevant residue of non-flat gluing can be represented, in the two-branch case, by a norm-preserving relative phase.

Thus \(\Theta_{\mathfrak S^\ast}(A,B)\) is not a free psychological fitting parameter. It represents the residual mismatch produced by transporting and gluing the local logic-worlds associated with \(A\), \(\bar A\), and \(B\).
When the contextual spacetime form is displayed inside the fixed classical spacetime form, the realized probability is the squared norm of the glued realization contribution:
\[
P_{\mathfrak S^\ast\to\mathfrak S_{\mathrm{cl}}}(B)
=
\left|\rho_A+\rho_{\bar A}\right|^2.
\]
Expanding this expression gives
\[
P_{\mathfrak S^\ast\to\mathfrak S_{\mathrm{cl}}}(B)
=
P(A)P(B\mid A)
+
P(\bar A)P(B\mid \bar A)
+
2\sqrt{P(A)P(B\mid A)P(\bar A)P(B\mid \bar A)}
\cos\Theta_{\mathfrak S^\ast}(A,B).
\]
The usual quantum-like interference term is therefore not postulated from outside. It is the fixed-spacetime projection of a non-flat gluing of locally classical realization contributions.

\begin{figure}[t]
\centering
\begin{tikzpicture}[
    node distance=1.6cm and 2.4cm,
    box/.style={draw, rounded corners, align=center, minimum width=3.0cm, minimum height=0.9cm},
    smallbox/.style={draw, rounded corners, align=center, minimum width=2.6cm, minimum height=0.8cm},
    arrow/.style={-{Latex[length=2mm]}, thick}
]

\node[box] (LA) {$L_A$\\local logic-world};
\node[box, right=of LA] (LB) {$L_{\bar A}$\\local logic-world};

\node[smallbox, below=of LA] (rA) {$r_A=\sqrt{P(A)P(B\mid A)}$};
\node[smallbox, below=of LB] (rB) {$r_{\bar A}=\sqrt{P(\bar A)P(B\mid \bar A)}$};

\coordinate (midr) at ($(rA)!0.5!(rB)$);

\node[box, below=1.3cm of midr] (glue)
{non-flat gluing\\
contextual transport\\
\(\Theta_{\mathfrak S^\ast}(A,B)\)};

\node[box, below=of glue] (proj)
{projection into fixed\\classical spacetime form\\
\(\mathfrak S^\ast\to\mathfrak S_{\mathrm{cl}}\)};

\node[box, below=of proj] (interf)
{interference cross term\\
\(2r_A r_{\bar A}\cos\Theta_{\mathfrak S^\ast}(A,B)\)};

\draw[arrow] (LA) -- (rA);
\draw[arrow] (LB) -- (rB);
\draw[arrow] (rA) -- (glue);
\draw[arrow] (rB) -- (glue);
\draw[arrow] (glue) -- (proj);
\draw[arrow] (proj) -- (interf);

\draw[arrow, dashed] (LA) to[bend left=15] node[above] {\(G_{A\bar A}\)} (LB);
\draw[arrow, dashed] (LB) to[bend left=15] node[below] {\(G_{\bar A A}\)} (LA);

\end{tikzpicture}
\caption{
Schematic mechanism of contextual spacetime formation. Local logic-worlds contribute
locally classical realization weights. When these contributions are transported and
nontrivially glued through contextual transition maps, a holonomy-like mismatch
\(\Theta_{\mathfrak S^\ast}(A,B)\) is generated. Projection into a fixed classical
spacetime form produces the interference cross term.
}
\label{fig:gluing-mechanism}
\end{figure}

Figure~\ref{fig:gluing-mechanism} summarizes the mechanism: the interference term is the projected cross term of non-flat contextual gluing, not an independently postulated correction to classical probability.



More generally, the resulting probability assignment is not the old \(P\) on the old event space. It is
\begin{equation}
P_{\mathfrak{S}^{\ast}}:\mathcal{E}_{\mathfrak{S}^{\ast}}\longrightarrow [0,1],
\end{equation}
where \(\mathcal{E}_{\mathfrak{S}^{\ast}}\) is the event structure produced by contextual spacetime formation.

The preceding mechanism motivates the following general projection form:
\[
P_{\mathfrak S^\ast\to \mathfrak S_{\mathrm{cl}}}(B)
=
P(A)P(B\mid A)+P(\bar A)P(B\mid \bar A)
+
I_{\mathfrak S^\ast}(A,B).
\]
Here the arrow emphasizes that the expression is the projection of a contextual
spacetime probability into a classical display form. In the two-branch case,
the projected mismatch can be parameterized as
\[
I_{\mathfrak S^\ast}(A,B)
=
2\sqrt{P(A)P(B\mid A)P(\bar A)P(B\mid \bar A)}
\cos\Theta_{\mathfrak S^\ast}(A,B).
\]

\begin{definition}[Interference as Projected Spacetime Holonomy]
The interference term \(I_{\mathfrak{S}^{\ast}}(A,B)\) is the fixed-classical-spacetime projection of the nontrivial gluing between local logic-worlds associated with \(A\) and \(B\). It measures the failure of contextual spacetime formation to be globally reducible to \(\mathfrak{S}_{\mathrm{cl}}\).
\end{definition}

In the two-branch case, this parameterization is exactly the form derived in the mechanism above.
%

\begin{remark}[Status of Holonomy]
At this stage, the term ``holonomy'' is used in an operational-structural
sense. The parameter \(\Theta_{\mathfrak S^\ast}(A,B)\) denotes the residual
mismatch generated by contextual transport and gluing among local logic-worlds.
A fully geometric theory would require specifying the relevant space of contexts,
transition maps, and connection-like structure from which \(\Theta_{\mathfrak S^\ast}\)
is derived. The present paper identifies this target but does not yet complete
that derivation.
\end{remark}

\begin{principle}[Projection Principle]
Quantum probability is what contextual spacetime formation looks like when projected onto an a priori fixed classical spacetime form.
\end{principle}

\subsection{Bookkeeping--Geometry Conversion}

The relation to prior single-state obstruction results can now be expressed cleanly.
Under fixed shared-state semantics, a classical model that reproduces contextual statistics must pay for context dependence through labels, memory, or context-indexed hidden variables. The present theory accepts this obstruction but shifts its location.

\begin{principle}[Bookkeeping--Geometry Conversion]
Classical contextual bookkeeping cost can be converted into contextual spacetime formation. What appears in a fixed classical representation as an external label or memory cost appears in contextual spacetime formation as nontrivial gluing, represented in projection by holonomy-like phase:
\begin{equation}
I(M;C)>0\quad \longleftrightarrow \quad\Theta_{\mathfrak{S}^{\ast}}\neq 0.
\label{eq:bookkeeping_geometry}
\end{equation}
\end{principle}

More precisely, there are two strategies for satisfying the requirements:
\begin{enumerate}
    \item \textbf{Classical bookkeeping strategy:} keep a fixed spacetime form \(\mathfrak{S}_{\mathrm{cl}}\), and pay additional context-label memory cost \(M\).
    \item \textbf{Contextual spacetime strategy:} avoid explicit context labels, maintain single-state semantics, and allow the spacetime form to become nontrivially glued, producing \(I_{\mathfrak{S}^{\ast}}\neq 0\).
\end{enumerate}

In this sense, quantum probability is not opposed to classical probability merely as one calculus to another. It is an alternative realization strategy under finite-state requirements.

This conversion also yields an empirical implication. If a task is modified so that subjects can explicitly store, rehearse, or label the relevant context, then part of the contextual burden is shifted from spacetime gluing to explicit memory.
The theory therefore predicts a reduction of order effects and interference-like deviations when reliable external context memory is provided:
\begin{equation}
I(M; C)\uparrow \quad \Rightarrow \quad
|I_{\mathfrak S^\ast}(A,B)|\downarrow .
\label{eq:memory-interference-tradeoff}
\end{equation}
This prediction, summarized in Eq.~\eqref{eq:memory-interference-tradeoff}, distinguishes the present interpretation from a purely descriptive quantum-probability fit.
Empirical quantum-cognition work on question order effects provides a natural testing ground for this prediction, since such effects have been modeled by noncommuting question operators and tested against precise quantum-probability constraints \cite{WangBusemeyer2013}.
In this light, question-order effects can be reinterpreted as cases in which the second question is not answered within the same flat event structure as the first.
The order manipulation changes the contextual transition through which the later answer is realized. The present theory predicts that if the relevant previous context is externally stabilized---for example by displaying prior questions and answers, by allowing rehearsal, or by requiring explicit context labels---then the fitted interference term should decrease. Existing order-effect paradigms therefore provide a natural starting point for testing the bookkeeping--geometry trade-off.

\begin{principle}[Requirement-First Contextual Spacetime Formation]
Let \(\mathcal{R}\) be the requirements of finite-state coherent world-formation: single-state semantics, context-sensitive response, avoidance of explicit context labels, and intersubjective transformability. If these requirements cannot be realized within a single global Boolean spacetime form \(\mathfrak{S}_{\mathrm{cl}}\), then the realizing system must either:
\begin{enumerate}
    \item introduce external contextual bookkeeping \(M\), or
    \item transform the spacetime form itself into a contextual form \(\mathfrak{S}^{\ast}\).
\end{enumerate}
When the second route is taken, the fixed-spacetime projection of \(P_{\mathfrak{S}^{\ast}}\) has the form of quantum probability with interference.
\end{principle}

\section{Consequences}
\label{sec:consequences}

The previous sections introduced the central theoretical move of the paper: quantum-like probability is interpreted not as a primitive modification of probability, but as the fixed-spacetime projection of contextual spacetime formation under finite-state requirements. We now draw several immediate consequences. These are structural consequences, not claims that the present framework already derives the full quantitative content of quantum mechanics, general relativity, or Gödel's incompleteness theorems.

\subsection{Observer- and History-Dependent Spacetime Forms}

If spacetime is not primitive, then it need not be identical for every realizing subject. Let \(i\) denote a subject or observer, and let \(h_i\) denote its history of contextual interventions. The spacetime form realized by that subject is written
\begin{equation}
\mathfrak{S}_i(h_i).
\label{eq:observer_spacetime}
\end{equation}
The associated event structure and probability assignment are
\begin{equation}
\mathcal{E}_{\mathfrak{S}_i(h_i)},\qquad P_{\mathfrak{S}_i(h_i)}.
\end{equation}

This notation expresses a central departure from both Newtonian and standard Kantian assumptions. In Newtonian thought, time and space are absolute containers. In Kantian thought, time and space are a priori forms of human intuition. In the present framework, they are neither absolute containers nor fixed once-and-for-all forms. They are dynamically realized forms that depend on the finite subject, its history, and its mode of interaction.

This does not imply that each subject lives in an arbitrary private universe. The claim is weaker and more precise:
\begin{quote}
Different subjects and histories may realize different spacetime forms, but objectivity consists in the invariant structures preserved across transformations among those forms.
\end{quote}
Let \(T_{ij}\) denote a transformation relating the spacetime form of subject \(i\) to that of subject \(j\):
\begin{equation}
T_{ij}:\mathfrak{S}_i(h_i)\longrightarrow \mathfrak{S}_j(h_j).
\end{equation}
Then an objective structure is not a structure embedded in an absolute spacetime, but a structure preserved under such transformations:
\begin{equation}
\mathcal{O}=\Inv\left(\{\mathfrak{S}_i(h_i),P_{\mathfrak{S}_i(h_i)}\}_{i,h}\right).
\label{eq:objectivity_invariant}
\end{equation}

\begin{definition}[Objectivity]
Objectivity is invariance under transformations among subject- and history-dependent spacetime formations.
\end{definition}

This replaces absolute-spacetime objectivity by transformation-invariant objectivity. The world remains real, but its reality is not defined by membership in one pre-given spacetime container. It is defined by constraints and invariants that survive multiple realizations.

\subsection{Physical Objects, Facts, and Laws as Invariants}

A physical object is not introduced as a primitive occupant of space. Rather, it is defined as a stable equivalence class across multiple spacetime formations.

Let \(A_i\in\mathcal{E}_{\mathfrak{S}_i(h_i)}\) be an event or object-like structure appearing for subject \(i\). Let \(A_j\in\mathcal{E}_{\mathfrak{S}_j(h_j)}\) be the corresponding structure appearing for subject \(j\). We say that these are realizations of the same physical object when
\begin{equation}
T_{ij}(A_i)\sim A_j
\end{equation}
under the relevant transformations and operational tests. Thus:
\begin{equation}
\mathrm{Object}=[A_i]_{i,h,T}.
\label{eq:object_equivalence}
\end{equation}

A fact is then a high-stability event under intersubjective calibration:
\begin{equation}
\mathrm{Fact}(A)\quad \Longleftrightarrow \quad P_{\mathfrak{S}_i(h_i)}(A_i)\approx 1\quad \text{for many } i,h_i,
\label{eq:fact_stability}
\end{equation}
with compatibility under transformations \(T_{ij}\).

A physical law is not primarily a rule imposed inside an absolute spacetime. It is a relation preserved across transformations among spacetime forms. Let \(L_i\) be a lawful relation expressed in \(\mathfrak{S}_i(h_i)\). A law is objective if
\begin{equation}
L_i(A_i,B_i,\ldots)=L_j(T_{ij}A_i,T_{ij}B_i,\ldots)
\label{eq:law_invariance}
\end{equation}
for the relevant transformations \(T_{ij}\).

\begin{definition}[Law]
A physical law is a transformation-invariant relation among event structures across subject-, history-, and interaction-dependent spacetime formations.
\end{definition}

This formulation preserves scientific objectivity while removing the need for an absolute spacetime background. Science becomes the practice of identifying, stabilizing, and formalizing invariants across many forms of realization.

\subsection{Nonlocality as the Projection of Pre-Spatial Relation}

Quantum nonlocality is usually formulated against the background of a pre-given spacetime. Two systems are assumed to be spatially separated. Measurements are performed at distant locations. The problem is then to explain correlations that cannot be decomposed into local hidden-variable form:
\begin{equation}
P_{\mathrm{cl}}(a,b\mid x,y)=\sum_{\lambda}\mu(\lambda)P(a\mid x,\lambda)P(b\mid y,\lambda).
\label{eq:bell_local}
\end{equation}
Here \(x\) and \(y\) are local measurement settings, \(a\) and \(b\) are outcomes, and \(\lambda\) is a shared hidden variable. Bell-type violations show the failure of such a decomposition under the relevant assumptions \cite{Bell1964}.

The present framework changes the order of explanation. It does not begin from two already localized systems in an already given spacetime. It begins from a pre-spatial relation structure
\begin{equation}
R_{AB},
\end{equation}
which is realized as spatially separated events only after contextual spacetime formation. Thus the quantum joint probability is written not as a local decomposition over pre-existing spatial parts, but as
\begin{equation}
P_{\mathfrak{S}^{\ast}}(a,b\mid x,y)=\Phi_{\mathfrak{S}^{\ast}}(R_{AB};x,y),
\label{eq:nonlocal_relation}
\end{equation}
where \(\Phi_{\mathfrak{S}^{\ast}}\) is the realization map induced by contextual spacetime formation.

\begin{principle}[Nonlocality as Projection]
Quantum nonlocality is the appearance, within a formed spacetime, of a relation that is not originally decomposed into local spatial parts.
\end{principle}

In this view, nonlocality is not action across a pre-given space. It is the fixed-spacetime projection of a relation whose unity is prior to spatial separation. The no-signalling condition remains essential:
\begin{align}
\sum_b P_{\mathfrak{S}^{\ast}}(a,b\mid x,y) &= P_{\mathfrak{S}^{\ast}}(a\mid x),\\
\sum_a P_{\mathfrak{S}^{\ast}}(a,b\mid x,y) &= P_{\mathfrak{S}^{\ast}}(b\mid y).
\end{align}
This means that the correlation cannot be used as a signal propagating through space. The reason, in the present interpretation, is not that an influence travels instantaneously but harmlessly. Rather, the correlation does not arise from propagation inside space in the first place. It arises from the projection of a pre-spatial relation into a spacetime form in which its realized components appear separated.

This interpretation does not by itself derive the quantitative Tsirelson bound or the full structure of Bell correlations. Its role is interpretive and structural:
\begin{quote}
Bell-type nonlocality shows that the assumption of a globally local Boolean spacetime decomposition is too strong.
\end{quote}
Within the present theory, this is the spatial analogue of contextuality. Contextuality shows that a family of contextual operations cannot always be embedded into one global Boolean event structure. Nonlocality shows that a relational structure cannot always be decomposed into independent local parts inside one pre-given spacetime.

\subsection{Gödelian Incompleteness and Closure Failure}

The same requirement-first framework also suggests a connection with Gödelian incompleteness. The claim is not that quantum cognition proves Gödel's theorem. Gödel's incompleteness theorems remain mathematical results about formal systems satisfying precise conditions such as consistency, effective axiomatizability, and sufficient arithmetic strength \cite{Godel1931}. The present claim is structural:
\begin{quote}
Gödelian incompleteness can be viewed as a paradigmatic logical instance of a broader closure-failure schema.
\end{quote}

Let \(\mathcal{R}\) be the set of requirements to be realized, and let \(L\) be a finite logic-world. Define
\begin{equation}
\Real_L(\mathcal{R})
\end{equation}
as the subset of requirements realizable within \(L\). A finite logic-world is complete with respect to \(\mathcal{R}\) only if
\begin{equation}
\Real_L(\mathcal{R})=\mathcal{R}.
\end{equation}
In general, however,
\begin{equation}
\Real_L(\mathcal{R})\subsetneq \mathcal{R}.
\label{eq:realization_incomplete}
\end{equation}
The residual is
\begin{equation}
\Delta_L=\mathcal{R}\setminus \Real_L(\mathcal{R}).
\label{eq:delta_logic}
\end{equation}

The same residual structure appears differently in different domains:
\begin{equation}
\Delta_L\sim
\begin{cases}
\text{undecidable propositions}, & \text{formal logic},\\
\text{contextuality and interference}, & \text{probability},\\
\text{nonlocal relational constraints}, & \text{quantum physics},\\
\text{spacetime transformation}, & \text{experience}.
\end{cases}
\label{eq:delta_domains}
\end{equation}

\begin{principle}[Generalized Incompleteness]
No finite realization system can close all requirements within a single internal logic-world. What cannot be closed appears as meta-level information, contextuality, interference, spacetime transformation, or undecidability.
\end{principle}

The connection to Gödel is therefore neither a derivation nor an identity. It is a controlled structural analogy. In each case, a finite formal or experiential system fails to close all requirements within a single internal logic-world.

In compressed form:
\begin{quote}
Gödel is incompleteness in logic-worlds. Quantum contextuality is incompleteness in probability-worlds. Spacetime formation is the operation that re-realizes this incompleteness as a coherent world.
\end{quote}

\section{Discussion}
\label{sec:discussion}

\subsection{From Quantum Cognition to Transcendental Structure}

Quantum cognition has often been understood as a mathematical framework for modeling order effects, conjunction and disjunction fallacies, ambiguity, and violations of classical probability. The present framework shifts the emphasis. Quantum cognition is treated not merely as a model of judgment, but as a diagnostic of the conditions under which a coherent world can be formed by a finite agent.

The key step is to replace the question
\begin{quote}
Why does cognition use quantum probability?
\end{quote}
with the deeper question
\begin{quote}
What kind of spacetime form must a finite subject construct in order to integrate multiple contexts without duplicating internal representations?
\end{quote}
Under this reinterpretation, quantum probability is not the starting point. It is the fixed-spacetime projection of a more primitive process: contextual realization under requirements.

\subsection{Relation to Prior Single-State Obstruction Results}

Prior work on single-state contextuality identifies a precise representational pressure.
If an agent or system maintains a single internal state with fixed semantics across contexts, then contextual dependence cannot always be represented within a single classical probability space without additional contextual information. In a classical embedding, this additional information appears as an auxiliary label or memory \(M\).

The present theory accepts this obstruction but changes its interpretation. What appears as contextual bookkeeping cost in a fixed classical representation is interpreted as spacetime-formation cost at the transcendental level. A classical system pays with labels, memory, or context-indexed hidden variables. A contextual spacetime formation pays with holonomy-like mismatch, noncommutativity, and interference.

Thus:
\begin{equation}
\text{classical contextual memory}\quad \longleftrightarrow \quad\text{contextual spacetime holonomy}.
\label{eq:classical_memory_holonomy}
\end{equation}
This is the main bridge between prior single-state obstruction results and the present requirement-first theory.

\subsection{Why This Is Not Subjective Idealism}

The framework developed here may appear, at first glance, to imply that the world is constructed by the subject. This would be a misunderstanding. The claim is not that the subject freely creates reality. The subject is constrained by requirements, interventions, resistance, and intersubjective invariants.

A subject may form a spacetime
\begin{equation}
\mathfrak{S}_i(h_i),
\end{equation}
depending on its structure and history. But objective reality is not identified with any single \(\mathfrak{S}_i(h_i)\). It is identified with the invariant structure preserved across many such formations:
\begin{equation}
\mathrm{Reality}=\Inv\left(\{\mathfrak{S}_i(h_i),P_{\mathfrak{S}_i(h_i)}\}_{i,h}\right).
\label{eq:reality_inv}
\end{equation}

Therefore, the loss of absolute spacetime does not imply the loss of reality. It implies a transformation of the meaning of reality: reality is no longer what occupies an absolute container, but what constrains and survives transformations among contextual spacetime formations.

\subsection{Philosophical Position: Transcendental-Operational Realism}

The philosophical position of this paper is best described as \emph{transcendental-operational realism}.

It is \emph{transcendental} because it asks for the conditions under which objects, facts, probability, and spacetime can appear at all. In this respect, it inherits the Kantian insight that space and time are not empirical objects inside the world, but forms through which a world becomes experienceable.

It is \emph{operational} because contexts are not passive labels. They are interventions that modify internal states, outcome statistics, and the structure of possible experience.

It is \emph{realist} because the subject does not determine the world arbitrarily. Reality is what remains invariant, resistant, and reproducible across transformations among subjects, histories, and operations.

The framework therefore differs from three familiar positions:
\begin{enumerate}
    \item It differs from classical materialism because it does not assume objects in pre-given spacetime as primitive.
    \item It differs from subjective idealism because it does not reduce reality to private experience.
    \item It differs from instrumentalism because it treats invariants across operations as genuinely constraining structures, not merely as useful descriptions.
\end{enumerate}

\subsection{Scientific Objectivity Without Absolute Spacetime}

A possible concern is that the loss of absolute spacetime undermines scientific objectivity. The present theory argues the opposite. Scientific objectivity becomes more precisely defined.

Scientific measurement does not remove the subject by accessing a view from nowhere. Rather, it stabilizes transformations among subjects, instruments, histories, and contexts. Instruments externalize memory, standardize operations, and reduce private variation in spacetime formation. They do not bypass realization; they constrain it.

A scientific fact is therefore an event structure that remains stable under many forms of calibration:
\begin{equation}
P_{\mathfrak{S}_i(h_i)}(A_i)\approx 1\quad\text{for many } i,h_i,
\end{equation}
with compatibility under transformations \(T_{ij}\). A physical law is a relation preserved under such transformations:
\begin{equation}
L_i(A_i,B_i,\ldots)=L_j(T_{ij}A_i,T_{ij}B_i,\ldots).
\end{equation}
Thus, objectivity is not abolished. It is redefined as transformation-invariance across realizations.

This view is also compatible with recent work on open computation and quantum-like coherence in chemical reaction systems, where coherence is not imposed inside a closed representational space but emerges from the interaction between a process and its environment \cite{Gunji2026}. Such work suggests a possible material analogue of the present transcendental-operational claim: objectivity and coherence arise not by bypassing realization, but by stabilizing interactions between system and environment.

\subsection{Limits of the Present Claim}

The present framework is intentionally foundational and programmatic. Several limitations should be stated clearly.

First, the framework does not derive the full quantitative structure of quantum mechanics. It provides an interpretation of why quantum-like probability arises under finite contextual requirements, but it does not yet derive the Hilbert-space formalism, Born rule, or Tsirelson bound from first principles.

Second, the framework does not claim that human cognition physically generates the universe, nor that the present theory already derives physical spacetime in the sense of Lorentzian geometry, general relativity, or quantum gravity. It claims that spacetime, eventhood, probability, and objecthood must be treated as parts of requirement-realization rather than as primitive givens. The term ``spacetime'' is therefore used at the transcendental-operational level: it names the form in which events, objects, temporal order, spatial separation, and possible interaction are realized. Any extension from this level to a full physical theory of spacetime requires additional mathematical, empirical, and dynamical constraints.

Third, the framework does not claim that human brains are microscopic quantum systems. Quantum probability is treated as an effective representational structure arising from contextual requirements, not as evidence for quantum coherence in neural substrates.

Fourth, the connection to Gödel is structural rather than deductive. The present theory does not derive Gödel's incompleteness theorem. It places Gödelian incompleteness within a broader family of closure failures.

Finally, the notion of contextual spacetime holonomy remains to be developed quantitatively. The phase \(\Theta_{\mathfrak{S}^{\ast}}(A,B)\) should ultimately be derived from properties of contextual interventions and transition maps, not merely fitted to data.

\subsection{Future Work}

Several directions follow naturally.

First, one should develop explicit models of the transformation
\begin{equation}
\mathfrak{S}_{\mathrm{cl}}\rightarrow \mathfrak{S}^{\ast}
\end{equation}
and specify how \(\Theta_{\mathfrak{S}^{\ast}}(A,B)\) is generated by noncommuting contextual operations.

Second, empirical quantum-cognition data should be reanalyzed by treating interference terms as estimates of contextual spacetime holonomy rather than as free phase parameters.

Third, connections to information geometry, category theory, sheaf-theoretic contextuality, and operational reconstructions of quantum theory may provide a more precise mathematical language for logic-world gluing.

Fourth, cognitive experiments should test the trade-off predicted in Section 6.1 between explicit context memory and interference-like deviations. If additional memory, rehearsal, or explicit context labels reduce order effects, this would support the interpretation of quantum-like effects as a response to finite contextual integration.

Fifth, the present framework leaves open the dimensional structure of spacetime.
It does not yet explain why realized physical spacetime has three spatial dimensions and one temporal dimension. A future theory would need to derive dimensionality, temporal direction, and metric-causal structure from additional constraints on requirement-realization, information packing, and intersubjective invariance.

Related quantum-like models of perceptual dynamics, such as quantum-Zeno-type accounts of bistable perception, may also provide useful test cases for the relation between contextual intervention, temporal ordering, and stabilization
\cite{AtmanspacherFilkRomer2004}.

A further empirical direction is to relate the requirements used here to measurable cognitive resources such as attention, working memory, rehearsal, and external memory support.

A concrete experimental test would combine question-order paradigms with working-memory manipulations. For example, an \(N\)-back or digit-span load could be used to reduce the availability of internal contextual bookkeeping, while an external-support condition displays previous questions, answers, or context labels. The theory predicts that increased internal memory load should amplify order effects and fitted interference terms, whereas reliable external context support should reduce them. A crossed design would therefore test whether interference-like deviations track the availability of contextual bookkeeping resources rather than merely reflecting a fixed descriptive quantum parameter.

\section{Conclusion}
\label{sec:conclusion}

This paper has proposed a requirement-first theory of contextual spacetime formation. The guiding idea is simple:
\begin{quote}
There are only requirements. Time and space are part of realization.
\end{quote}
Instead of assuming that cognition takes place inside a fixed spacetime and then asking why probability becomes quantum-like, we began from finite-state requirements: single-state semantics, context-sensitive intervention, avoidance of explicit labels, coherent world-formation, and intersubjective transformability. Under these requirements, a fixed classical spacetime/event form is not always sufficient. When contextual realization is projected back into such a fixed form, the residual appears as interference:
\begin{equation}
P_{\mathfrak{S}^{\ast}\to \mathfrak{S}_{\mathrm{cl}}}(B)=P(A)P(B\mid A)+P(\bar A)P(B\mid \bar A)+I_{\mathfrak{S}^{\ast}}(A,B).
\end{equation}

The central proposal is that \(I_{\mathfrak{S}^{\ast}}(A,B)\) is not merely a probabilistic anomaly. It is the trace of contextual spacetime formation: the holonomy-like mismatch generated when local logic-worlds are glued into a coherent world under finite constraints.

The framework extends quantum cognition by moving from descriptive adequacy to transcendental structure. Quantum cognition is not only a way to model human judgment. It is a window into how finite subjects construct eventhood, objecthood, probability, time, and space.

The framework also extends prior single-state obstruction results.
In prior single-state analyses, classical models preserving fixed state semantics must pay an irreducible contextual information cost. The present theory interprets that cost as the fixed-spacetime shadow of a deeper alternative: instead of storing context labels, a subject may transform the spacetime form itself.

The resulting philosophical position is transcendental-operational realism. It rejects absolute spacetime as primitive, but it also rejects arbitrary subjectivism. Reality is what remains invariant across contextual spacetime formations. Physical objects are stable equivalence classes across such formations. Physical laws are transformation-invariant relations. Facts are high-stability events under intersubjective calibration.

The loss of absolute spacetime therefore does not mean the loss of world. It means that the unique world we inhabit is not primitive. It is a coherent gluing of local logic-worlds:
\begin{equation}
\mathcal{W}=\Glue\bigl(\{L_c\}_{c\in C}\bigr).
\end{equation}
When this gluing is flat, experience appears classical. When it has holonomy, it appears quantum-like. When it resists local spatial decomposition, it appears nonlocal. When it cannot close all its own requirements, it appears incomplete.

The final claim of the paper can therefore be stated as follows:
\begin{quote}
Quantum probability is the fixed-spacetime shadow of contextual realization. Spacetime is not the stage on which requirements are realized; it is one of the forms produced in their realization.
\end{quote}

\section*{Acknowledgments}

This work was supported by SOBIN Institute LLC under Research Grant SP010.  
The author used ChatGPT (OpenAI) for language editing and proofreading. The author takes full responsibility for all conceptual claims, mathematical formulations, references, and the final manuscript.

\end{document}